\documentclass{article} 
\usepackage{nips12submit_e,times}
\usepackage{amssymb}
\usepackage{amsmath}
\usepackage{amsthm}
\usepackage{bm}
\usepackage{url}
\usepackage[ruled,vlined]{algorithm2e}
\usepackage{graphicx}

\newif\ifarxiv
\newif\ifdraft
\arxivtrue
\newcommand{\arxivVer}[1]{\ifarxiv #1 \fi}
\newcommand{\wsVer}[1]{\ifarxiv \else #1 \fi}
\newcommand{\switchVer}[2]{\ifarxiv #2 \else #1 \fi}
\newcommand{\draftfootnote}[1]{\ifdraft .\footnote{**** #1} \fi}


\newtheorem{theorem}{Theorem}
\newtheorem{lemma}{Lemma}

\newtheorem{corollary}{Corollary}
\newtheorem{definition}{Definition}

\renewcommand{\vec}[1]{\bm{#1}}
\renewcommand{\v}[1]{\vec{#1}}
\renewcommand{\t}[1]{\tilde{#1}}
\newcommand{\bb}[1]{\mathbb{#1}}
\newcommand{\eqn}[1]{\begin{align}#1\end{align}}
\renewcommand{\L}{\mathcal{L}}
\newcommand{\LL}{\Lambda}
\newcommand{\V}{\mathcal{V}}
\newcommand{\T}{\mathcal{T}}
\newcommand{\U}{\mathcal{U}}

\newcommand{\mat}[1]{\left( \begin{matrix}#1\end{matrix} \right)}
\newcommand{\diag}[1]{\mbox{diag}\mat{#1}}
\newcommand{\ai}{\alpha_i}
\newcommand{\E}{\bb{E}}
\newcommand{\eps}{\varepsilon}

\title{Submodularity  
in Batch Active Learning and Survey Problems on Gaussian Random Fields}

\arxivVer{
\author{
Yifei Ma\\
Machine Learning Department,\\
Carnegie Mellon University\\
\texttt{yifeim@cs.cmu.edu} \\
\And
Roman Garnett \\
Robotics Institute, \\
Carnegie Mellon University \\
\texttt{rgarnett@cs.cmu.edu} \\
\And
Jeff Schneider \\
Robotics Institute, \\
Carnegie Mellon University \\
\texttt{schneide@cs.cmu.edu} \\
}
}
\nipsfinalcopy 

\begin{document}

\maketitle

\wsVer{\vspace{-5em}}

\begin{abstract}

Many real-world datasets can be represented in the form of a graph whose edge weights designate similarities between instances. A discrete Gaussian random field (GRF) model is a finite-dimensional Gaussian process (GP) whose prior covariance is the inverse of a graph Laplacian. 
Minimizing the trace of the prediction covariance $\Sigma$ (V-optimality) on GRFs has proven successful in batch active learning classification problems with budget constraints.
However, its worst-case bound has been missing. We show that the V-optimality on GRFs as a function of the batch query set is submodular and hence its greedy selection algorithm guarantees an $(1-1/e)$ approximation ratio. Moreover, GRF models have the absence-of-suppressor (AofS) condition. 
For active survey problems, we propose a similar survey criterion which minimizes $\v{1}^T\Sigma\v{1}$. In practice, V-optimality criterion performs better than GPs with mutual information gain criteria   and allows nonuniform costs for different nodes. 

\end{abstract}

\section{Introduction}
In many real-world applications, such as author classification based on coauthorship graphs, 
one or more output variables need to be predicted from a subset of queryable inputs, constrained by a budget.
In batch active learning applications, an algorithm refines its prediction by generating a list of queries for domain experts to answer [2, 5, 8]. In both cases, we consider the situation where 
the similarities between all instances, both labeled and unlabeled, are known a-priori. We formulate these similarities by a graph $G=(V,E)$ with edge weights $W$.
The goal is to optimize the subset of nodes to query within the budget so that the risk in prediction can be minimized. One common risk is the predictive variance, measured by the trace of the covariance matrix of multivariate outputs. Minimizing this risk is  known as the V-optimality criterion. 

Commonly used models  for these subset selection or batch active learning problems are discrete Gaussian random fields (GRF) [2, 5], finite-dimensional Gaussian processes (GP) [1], and linear regression with prior knowledge of covariances [3, 4].  GRFs formulate the input-output correspondence by the conditional distribution of a (maybe improper) gaussian prior whose inverse covariance is set to be the graph Laplacian, sometimes with diagonal regularization. Finite-dimensional GPs define the prior as $\mathcal{N}(0,W)$, where $W$ is an arbitrary covariance matrix. Finally, linear regression with prior knowledge of covariance is essentially a finite-dimensional GP with linear covariance. 

GPs have been used as a  base model for both subset selection and active learning [1]. One minor issue is that they require $W$ to be positive-semidefinite. However a major issue is that they do not have a provable lower bound for optimality [4]. Instead, [1] used an alternative mutual information gain (MIG) criterion for  selecting nodes for query. The MIG-criterion is naturally a normalized, monotone, and submodular function. As a result, a greedy algorithm gaurantees an $(1-1/e)$ approximation ratio. However, there is not classification-related risk function associated and the log determinates of covariance submatrices are sensitive to small eigenvalues, which can be a problem.\draftfootnote{**** I just realized that they changed from information gain to mutual information gain and our old explanation does not hold.  ????}

Another direction with GP models is to constrain the prior kernel matrix. [4] constrained the prior covariance matrix such that its diagonal entries are 1s and off-diagonal entries some very small values. 
However, these models can be approximated by regularized GRF models  in that $(I + \eps W)^{-1} =  I - \eps W + \eps^2W^2 - \cdots\approx  I-\eps W$,  when $\lim\eps^sW^s=0$, with small $\eps>0$.
[4] also proposed an absence-of-suppressors (AofS) condition that is sufficient for submodularity. However, it is generally hard to verify if a discrete GP meets the AofS condition, whereas we will show that every GRF is AofS.

Finally, [5] demonstrates semi-supervised and active learning using GRFs. Their motivation is that unlabeled nodes can reasonably influence the prediction by their edge weights with other nodes, because these weights can encode information such as sample density (e.g. using a radial basis function kernel to calculate the weight matrix). Later research [1] used spectral methods to boost the computation speed for subset selection in batch active learning. However, they only solved the subset selection case where every node query has a unit cost. Moreover, in both works, the optimization lacks worst-case guarantees. 

In this paper, we properly define a (regularized) discrete GRF model and prove an $(1-1/e)$ approximation ratio lower bound with the V-optimality criterion under a limited budget for a greedy subset selection algorithm. We also extend this bound for the scenario where different nodes have different costs. GRF models are a special type of AofS GP models. Conversely, any GP model whose conditional covariance matrices are always nonnegative is a GRF model and is AofS. From real-world experiments we show that GRF models using the V-optimality criterion present advantages over GP models with the MIG criterion and random selection.

\section{Gaussian Random Fields and Subset Selection Problems}

\subsection{The Gaussian Random Field (GRF) model}

Suppose the dataset can be represented in the form of a connected undirected graph $G=(V,E)$ where each node has an (either known or unknown) label and each edge $e_{ij}$ has a fixed nonnegative weight $w_{ij}(=w_{ji})$ that reflects the proximity, similarity, etc between nodes $v_i$ and $v_j$. Define the graph Laplacian of $G$ to be $L_0=\diag{W}-W$ and the regularized graph Laplacian to be $L_\sigma=L_0+\diag{\sigma_1^{-2},...,\sigma_N^{-2}}$ with $\sigma_i>0,\forall i=1,...,N$. We use $L$ to generalize both.

The discrete Gaussian Random Field (GRF) is a joint \emph{continuous} distribution on both labeled and unlabeled nodes, containing one tunable ``heat" parameter $\beta>0$, as
\eqn{
\bb{P}(\vec{y})\propto \exp\left(-\frac{\beta}{2}\v{y}^TL\v{y}\right)
= \begin{cases} 
\exp\left(-\frac{\beta}{2}\sum_{i,j}w_{ij}(y_i-y_j)^2 \right) & \mbox{(unregularized)}
\\
\exp\left(-\frac{\beta}{2}\sum_{i,j}w_{ij}(y_i-y_j)^2 + \sum_i \frac{1}{\sigma_i^2}y_i^2 \right) & \mbox{(regularized)}.
\end{cases}\label{eqn:theGRFmodel}
}
Assuming labels $\vec{y}_\mathcal{L} = \{y_{l_1},...,y_{l_{|\L|}}\}$ are tagged as $\vec{t}_\mathcal{L} \in [0,1]^{|\L|}$, a Gaussian Harmonic predictor predicts all unlabeled \emph{continuous} nodes  $\vec{y}_\mathcal{U}=\{y_{u_{1}},...,y_{u_{|\U|}}\}$ by \emph{factoring out} known variables [5],
\eqn{\bb{P}(\vec{y}_\U|\vec{y}_\mathcal{L} = \vec{t}_\mathcal{L}) \sim \mathcal{N}(\vec{f}_\mathcal{U},  \beta L_{\U}^{-1}) = \mathcal{N}(\vec{f}_\mathcal{U},  \beta L_{(\V-\L)}^{-1}) , \label{eqn:GaussianHarmonicPrediction}}
where $L_\U$ is the submatrix consisting of the unlabeled row and column indices in $L$, for example the lower right block of $L = \left( \begin{array}{cc} L_{ll} & L_{lu} \\ L_{ul} & L_{uu} \end{array}\right)$ and $\vec{f}_\mathcal{U} = (-L_{\U}^{-1}L_{ul}\vec{t}_\mathcal{L})$. By convention, $L_{(\V-\L)}^{-1}$ means the inverse of the submatrix. We use $L_{(\V-\L)}$ and $L_\U$ interchangeably because $\L$ and $\U$ partition the set of all nodes $\V$.

In some problems, a test set $\T\subset\U$ is specified. Define $T$ to be a $|\T|\times|\U|$ matrix such that $t_{ij}=\delta(v_{t_i},v_{u_j})$, i.e. $T\v{y}_\U = \v{y}_\T$. Otherwise, a default value of $T$ is the identity matrix of size $|\U|$. By \emph{marginalizing out} node variables  in $\U\backslash\T$ from (\ref{eqn:GaussianHarmonicPrediction}), we have
$
\bb{P}(\vec{y}_\T|\vec{t}_\mathcal{L}) \sim \mathcal{N}(T\vec{f}_\mathcal{U},  \beta TL_{(\V-\L)}^{-1}T^T).
$

Notice that GRFs differ from general GPs in that the predictive mean $\vec{f}_\mathcal{U} \in [0,1]^{N-|\L|}$ (Corollary 1). Unlike GPs, GRFs do not ``squeeze"  regression responses to $[0,1]$ to get probability predictions.

\subsection{Risk Minimization for Classification}

Since in GRFs regression responses are taken directly as probability predictions, it is computationally and analytically more convenient to apply the regression loss and risk directly in the GRF as in [2]. Assume the L2 loss to be our classification loss,
$
L_c(\v{y}_\T,\v{f}_\T) = \sum_{v_{t_i}\in\T}(y_{t_i}-f_{t_i})^2, $
and a risk function whose input variable is the subset $\L$ as
\eqn{R_c(\mathcal{L}) \textstyle
=\E^{\v{y}_\L\v{y}_\T}L_c(\v{y}_\T, \v{f}_\T | \v{y}_\L) 
= \E \E\left[\sum_i\big(\v{y}_{t_i} + {(TL_{\U}^{-1}L_{ul}\vec{y}_\mathcal{L})}_i\big)^2\middle| \v{y}_\L \right] 
= tr(TL_{\U}^{-1}T^T)  \label{eqn:Risk}
}
\subsection{The Subset Selection  Problem (the Active Learning for Classification Problem)}
Assume every vertex on the graph has a cost, (a unit cost if not specified), the major objective in this paper is to choose a subset of nodes $\mathcal{L}=\{v_{l_1},...,v_{l_{|\L|}}\}$ to query for labels, constrained by a given budget $C$, such that the risk is minimized. Formally,
\eqn{
\textstyle\arg  \min_{\mathcal{L}}  &\textstyle \quad R(\L)=R_c(\L)=tr(TL_{(\V-\L)}^{-1}T^T)  \nonumber\\
s.t. &\textstyle \quad \sum_{v\in \mathcal{L}}  c_v \leq C \label{eqn:problem}
}
Though not explicitly denoted, the specific matrix $T=(\delta(v_{t_i},v_{u_j}))_{i=1,j=1}^{|\T|, |\U|}$ depends on $\U=\V-\L$.

\section{Submodularity, Suppressor-free, and Bounds for Greedy Method}
In \S~3, we assume that $L$ is nonsingular. This could be achieved by either deleting a node (a row and a column) from the original undirected connected graph Laplacian, i.e. assuming that the dataset always contains a fixed label, or by using the regularized $L_\sigma$. In these cases, $L$ satisfies the following.  
\eqn{ 
\bullet \quad&L \mbox{ has proper signs, i.e. }l_{ij}  \geq 0  \mbox{ if } i=j \mbox{ and } l_{ij} \leq 0 \mbox{ if } i\neq j; \label{ppt1}\\
 \bullet \quad& {\textstyle L \mbox{ is undirected and connected, i.e. } l_{ij}  = l_{ji} \forall i,j\mbox{ and }\sum_{j\neq i}(-l_{ij})>0 \quad\forall i; }\label{ppt2}\\ 
\bullet \quad& {\textstyle \mbox{Node degree no less than number of edges, i.e. }\sum_j l_{ij} = \sum_j l_{ji} \geq 0 \quad\forall i=1,...,N; } \label{ppt3}\\
\bullet \quad&L {\textstyle \mbox{ is nonsingular and therefore positive definite, i.e. }\exists i \mbox{ s.t. } \sum_j l_{ij} = \sum_j l_{ji} > 0 . } \label{ppt4}
}
Conversely, our results hold if a finite-dimensional GP has covariance $=(L^{-1})$ and $L$ satisfies (\ref{ppt1}-\ref{ppt4}).
\subsection{Major results}
$\bullet$ \emph{Submodularity.} Under conditions (\ref{ppt1}-\ref{ppt4}), 
the risk reduction function $R_\Delta(\mathcal{L}) := R(\emptyset) - R(\L)$ is normalized, monotone, and submodular, i.e., 
\eqn{
R_\Delta(\emptyset) &= 0 \label{eqn:submod1} \\
R_\Delta(\mathcal{L}_1\cup\mathcal{L}_2) &\geq R_\Delta(\mathcal{L}_1) \label{eqn:submod2}\\
R_\Delta(\mathcal{L}_1\cup\{v\}) - R_\Delta(\mathcal{L}_1)
&\geq
R_\Delta(\mathcal{L}_1\cup\mathcal{L}_2\cup\{v\}) - R_\Delta(\mathcal{L}_1\cup\mathcal{L}_2) \label{eqn:submod3} \\
& \quad \forall \quad \mathcal{L}_1, \mathcal{L}_2, v \nonumber
}
$\bullet$ \emph{Greedy Algorithm and near-optimal bounds.}
If (\ref{eqn:submod1}-\ref{eqn:submod3}) is satified, the optimization problem (\ref{eqn:problem}) is NP-hard and the greedy selection algorithm (Algo~\ref{algo:greedy}) produces a query set $\L_g$ that gaurantee an $(1-1/e)$ optimality bound [6],
\eqn{
R_\Delta(\L_g) \geq (1-\frac{1}{e})\cdot R_\Delta(\L_*), \label{eqn:1-1/e}
}
where $\L_*$ is the global (NP) optimizer under the constraint $\sum_{v\in \mathcal{L_*}}  c_v \leq \sum_{v\in {\L_g}}c_v$. 

\begin{algorithm}
 \caption{{Greedy subset selection. Fast realization of * in [2], [5], and \switchVer{our longer version [9].}{\S~4.}}\label{algo:greedy}}
 \normalsize
 \KwIn{Node costs $c_v$, budget $C$, queryable pool $\mathcal{P}$, objective function $R(\L)$.}%
 \KwOut{A subset $\L\subset\mathcal{P}$ by greedy selection.}
 Define $\L\leftarrow\emptyset$, $R^{old} \leftarrow R(\emptyset)$.\\ %
 \While{available pool $\mathcal{P'} = \{v'\in \mathcal{P}-\L: c_{v'} + \sum_{v\in\L} c_v\leq C\}$ is not empty}{
\nlset{*\hspace{-2em}} Find $\displaystyle v'_* \leftarrow \arg\min_{v'\in\mathcal{P'}} \frac{R(\L\cup\{v'\})-R^{old} }{c_{v'}}$. \\
  Update $\L\leftarrow\L\cup\{v'_*\}, R^{old}\leftarrow R(\L).$ %
  }
\end{algorithm}

$\bullet$ \emph{Relationship with suppressor-free models.}
An absence-of suppressor (AofS) condition in regression models gaurantees submodularity. With our notation,\footnote
{$|Corr(Z, Res(X_i, S)/Res(X_j , S))| \leq |\rho(Z, Res(X_i, S))|$ in the original paper.} 
this condition is $|Corr(y_i, y_j | \L_1\cup \L_2)| \leq |Corr(y_i, y_j | \L_1)| ,\quad \forall v_i,v_j,\L_1,\L_2$. 
An example of suppressor variable is some node $v_k\in\L_2-\L_1$ such that $y_i+y_j=y_k$. Such variable is counter-tuitive in prediction models because knowing $y_k$ suppresses an unmodeled correlation between the predictors. We show that the GRF model is a perfect example for AofS condition. %

\subsection{Proofs}

\begin{lemma}
For any $L$ satisfying (\ref{ppt1}-\ref{ppt4}), the inverse of $L$ is nonnegative, i.e. $L^{-1}\geq 0$ (entry-wise).\footnote{In the following, for any vector or matrix $A$, $A\geq0$ always stands for $A$ being (entry-wise) nonnegative.}
\label{4}
\end{lemma}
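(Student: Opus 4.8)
\section*{Proof proposal for Lemma~\ref{4}}

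The plan is to recognize that properties (\ref{ppt1}--\ref{ppt4}) make $L$ a symmetric nonsingular $M$-matrix and to exploit the Neumann-series expression for the inverse of such a matrix. First I would split $L$ into its diagonal and off-diagonal parts, writing $L = D - B$ where $D = \diag{l_{11},\dots,l_{NN}}$ and $B$ is defined by $b_{ij} = -l_{ij}$ for $i\neq j$ and $b_{ii}=0$. By (\ref{ppt1}) we have $B \geq 0$ entrywise, and combining the connectedness bound $\sum_{j\neq i}(-l_{ij})>0$ of (\ref{ppt2}) with the nonnegative-row-sum condition (\ref{ppt3}) gives $l_{ii} \geq \sum_{j\neq i}(-l_{ij}) > 0$, so $D$ is a positive diagonal matrix and is invertible. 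This lets me symmetrize the splitting as $L = D^{1/2}(I - S)D^{1/2}$ with $S := D^{-1/2}BD^{-1/2} \geq 0$ symmetric.

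The heart of the argument is to show that the spectral radius satisfies $\rho(S) < 1$, since then the Neumann series converges and delivers the nonnegativity. Because $S$ is symmetric and nonnegative, the Perron--Frobenius theorem identifies $\rho(S) = \lambda_{\max}(S)$. Now $I - S = D^{-1/2} L D^{-1/2}$ is congruent to $L$, hence positive definite by (\ref{ppt4}); therefore every eigenvalue of $I-S$ is strictly positive, which is to say every eigenvalue of $S$ is strictly less than $1$. In particular $\lambda_{\max}(S) < 1$, so $\rho(S) = \lambda_{\max}(S) < 1$.

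Once $\rho(S) < 1$ is established, the conclusion is routine. The Neumann series $\sum_{k\geq 0} S^k$ converges to $(I-S)^{-1}$, and since each power $S^k$ of the nonnegative matrix $S$ is itself nonnegative, we get $(I-S)^{-1} = \sum_{k\geq 0} S^k \geq 0$. Unwinding the factorization, $L^{-1} = D^{-1/2}(I-S)^{-1}D^{-1/2}$, and conjugating a nonnegative matrix by the nonnegative diagonal matrix $D^{-1/2}$ preserves nonnegativity, so $L^{-1} \geq 0$ as claimed.

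I expect the one genuinely non-mechanical step to be the strict bound $\rho(S) < 1$. The cheap estimate $\rho(S) \leq \|S\|_\infty \leq 1$ coming from the row sums is not enough to run the series; the strictness is exactly where positive definiteness (\ref{ppt4})---as opposed to mere positive semidefiniteness---must enter, via the congruence $I - S = D^{-1/2}LD^{-1/2}$ together with the Perron--Frobenius identity $\rho(S)=\lambda_{\max}(S)$. A self-contained alternative that avoids Perron--Frobenius is to read $D^{-1}B$ as the transition matrix of an absorbing random walk whose leakage at the node guaranteed by (\ref{ppt4}) forces $S^k\to 0$; this again yields $\rho(S)<1$ and a nonnegative series, but the spectral route above is the shorter one.
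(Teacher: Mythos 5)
Your proof is correct, and it follows the same basic strategy as the paper's: split off the diagonal, $L = D - B$, reduce everything to showing that the spectral radius of the off-diagonal iteration matrix is strictly less than one, and then expand a Neumann series of nonnegative terms. The difference is in how that crucial spectral bound is obtained. The paper works with the nonsymmetric matrix $D^{-1}W$ (its $W$ is your $B$), derives $|\lambda_k| \leq 1$ from the row-sum ($\infty$-norm) estimate supplied by the diagonal-dominance condition (\ref{ppt3}), and then invokes invertibility (\ref{ppt4}) to claim strictness. You instead symmetrize, $S = D^{-1/2}BD^{-1/2}$, identify $\rho(S) = \lambda_{\max}(S)$ by Perron--Frobenius, and obtain $\lambda_{\max}(S) < 1$ from positive definiteness of $L$ via the congruence $I - S = D^{-1/2}LD^{-1/2}$. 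Your route is actually tighter at the one delicate point: invertibility of $I - D^{-1}W$ only excludes the eigenvalue $1$, not a possible eigenvalue $-1$ (and complex eigenvalues of modulus one are ruled out only by the similarity to a symmetric matrix, which the paper leaves implicit), so the paper's step ``hence $|\lambda_k| < 1$'' has a small gap that your Perron--Frobenius-plus-congruence argument closes cleanly. What the paper's version buys in exchange is elementarity---a bare norm estimate with no Perron--Frobenius---and a direct use of (\ref{ppt3}), which in your argument is needed only to guarantee that the diagonal $D$ is strictly positive.
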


\begin{proof}
Define $D = \diag{L}$ and $W = D-L$, we have
$L = D-W = D(I-D^{-1} W). \label{Ldecomp}$

According to (\ref{ppt1}), entry-wise $D \geq 0$, $W\geq0$ and $D^{-1}W\geq0$.  
Furthermore, by (\ref{ppt3}), 
\eqn{0\leq D^{-1}W 
&= \Big(\frac{w_{ij}}{d_{ii}}\Big)_{i,j=1}^N\leq \Big(\frac{w_{ij}}{\sum_{k}w_{ik}}\Big)_{i,j=1}^N, \\
\|D^{-1} W\|_\infty :=\sup_{\v{x}\neq0} \frac{\max_i |(D^{-1} W \v{x})_i|}{\max_i |x_i|} 
&=\max_i \sum_j|(D^{-1}W)_{ij}| \leq \max_i \sum_j\frac{w_{ij}}{\sum_kw_{ik}} \leq 1.
}
Thus, any eigenvalue $\lambda_k$ and its corresponding eigenvector $\v{v}_k$ of $D^{-1}W$ needs to satisfy $
|\lambda_k|\|\v{v}_k\|_\infty = \|\lambda_k\v{v}_k\|_\infty = \|D^{-1}W\v{v}_k\|_\infty \leq \|\v{v}_k\|_\infty,
 \quad$ i.e. $|\lambda_k|\leq1\quad\forall k=1,...,N$.

Moreover, (\ref{ppt4}) the invertibility of $L$  implies that $(I-D^{-1}W)$ is invertible, i.e. having no 0 eigenvalue. Hence, $|\lambda_k|<1, \forall k=1,...,N$ and $\lim_{n\to\infty} (D^{-1}W)^n = 0$. The latter yeilds the following,
\eqn{
L^{-1} = (1-D^{-1}W)^{-1}D^{-1}=[I+D^{-1}W +(D^{-1}W)^2+\cdots]D^{-1}. \label{Tayler}
}
Since every term in the right hand side of (\ref{Tayler}) is nonnegative, $L^{-1}$ should also be nonnegative.
\end{proof}

\begin{corollary}
GRF prediction functor $L_\U^{-1}L_{ul}$ maps  $\v{y}_\L\in[0,1]^{|\L|}$ to $\v{f}_\U = -L_\U^{-1}L_{ul}\v{y}_\L\in[0,1]^{|\U|}$.
\end{corollary}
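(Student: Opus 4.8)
The plan is to show that the map is given by the matrix $M := -L_\U^{-1}L_{ul}$, to prove that $M$ is entry-wise nonnegative with every row summing to at most one, and then to conclude by a short monotonicity argument that $M$ carries the box $[0,1]^{|\L|}$ into $[0,1]^{|\U|}$.

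First I would transfer Lemma~\ref{4} from $L$ to the principal submatrix $L_\U$. Deleting the labeled rows and columns preserves the proper signs (\ref{ppt1}) and the positive diagonal, and $L_\U$ is positive definite (hence nonsingular, (\ref{ppt4})) as a principal submatrix of the positive definite $L$. The weak diagonal dominance (\ref{ppt3}) also survives: for $i\in\U$ we have $\sum_{j\in\U} l_{ij} = \sum_{j} l_{ij} + \sum_{j\in\L}(-l_{ij}) \geq 0$, since the first term is nonnegative by (\ref{ppt3}) and the second by (\ref{ppt1}) because $i\neq j$ for $j\in\L$. These are precisely the properties that the proof of Lemma~\ref{4} uses, so the same argument gives $L_\U^{-1}\geq 0$.

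Next, the cross block $-L_{ul}$ is entry-wise nonnegative, because its entries are $-l_{ij}$ with $i\in\U$ and $j\in\L$ (so $i\neq j$), which are nonnegative by (\ref{ppt1}). Hence $M = L_\U^{-1}(-L_{ul})$ is a product of two nonnegative matrices and is itself nonnegative, which immediately yields the lower bound $\v{f}_\U = M\v{y}_\L \geq 0$ whenever $\v{y}_\L\geq 0$.

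The upper bound is the only real work. The key identity comes from applying $L$ to the all-ones vector $\v{1}_\V$ and reading off the unlabeled rows: $(L\v{1}_\V)_\U = L_\U\v{1}_\U + L_{ul}\v{1}_\L \geq 0$ by (\ref{ppt3}). Rearranging gives $-L_{ul}\v{1}_\L \leq L_\U\v{1}_\U$ entry-wise, and multiplying on the left by the nonnegative matrix $L_\U^{-1}$ (which preserves entry-wise inequalities) yields $M\v{1}_\L \leq \v{1}_\U$, i.e. every row of $M$ sums to at most one. Finally, from $M\geq 0$ and $0\leq\v{y}_\L\leq\v{1}_\L$, the monotonicity of multiplication by a nonnegative matrix gives $0\leq\v{f}_\U = M\v{y}_\L \leq M\v{1}_\L \leq \v{1}_\U$, which is the claim. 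I expect the only delicate point to be the first step, namely verifying that the hypotheses of Lemma~\ref{4} genuinely transfer to $L_\U$ (in particular that diagonal dominance is not destroyed by deleting rows and columns); once the row-sum bound $M\v{1}_\L \leq \v{1}_\U$ is in hand, the remainder is routine bookkeeping of entry-wise inequalities.
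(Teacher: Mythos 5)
Your proposal is correct and follows essentially the same route as the paper: nonnegativity of $L_\U^{-1}$ and $-L_{ul}$ gives the lower bound, and the row-sum identity $L_\U\v{1} + L_{ul}\v{1} \geq 0$ (the paper writes this as $\mat{L_\U, L_{ul}}\cdot\v{1}\geq 0$), pushed through the nonnegative matrix $L_\U^{-1}$, gives the upper bound via monotonicity. Your only addition is the explicit check that the hypotheses of Lemma~1 transfer to the principal submatrix $L_\U$, a step the paper leaves implicit here (it is stated only later, inside the proof of Lemma~2).
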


\begin{proof}
Since $L_\U\geq0$ and $-L_{ul}\geq0$, we have 
$
\v{y}_\L\geq0 \Rightarrow L_\U^{-1}(-L_{ul})\v{y}_\L\geq0$ and $\v{y}_\L\geq\v{y}'_\L \Rightarrow L_\U^{-1}(-L_{ul})\v{y}_\L\geq L_\U^{-1}(-L_{ul})\v{y}'_\L.
$
On the other hand, $\mat{L_\U,L_{ul}}\cdot\v{1}\geq0$ and $L_\U^{-1}\geq0$ imply $\mat{I,L_\U^{-1}L_{ul}}\cdot\v{1}\geq0$, i.e. $\v{1}+L_\U^{-1}L_{ul}\v{1}\geq0 $. Hence, $ \v{1}\geq -L_\U^{-1}L_{ul}\v{1} \geq -L_\U^{-1}L_{ul}\v{y}_\L$.
\end{proof}

\begin{lemma}
Suppose $L=\mat{L_{11}&L_{12}\\L_{21}&L_{22}}$ satisfies (\ref{ppt1}-\ref{ppt4}), then $L^{-1} - \mat{L_{11}^{-1} &0\\0&0}$ is positive-semidefinite and nonnegative. 
\end{lemma}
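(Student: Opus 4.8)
The plan is to make the block structure explicit via the Schur complement and then read both claims off the resulting factorization. Writing $S = L_{22}-L_{21}L_{11}^{-1}L_{12}$ for the Schur complement of $L_{11}$ and $M = -L_{11}^{-1}L_{12}$, the standard block-inversion formula (together with $L_{21}=L_{12}^T$, so that $L_{21}L_{11}^{-1}=-M^T$) gives
\eqn{L^{-1} - \mat{L_{11}^{-1} & 0 \\ 0 & 0} &= \mat{MS^{-1}M^T & MS^{-1} \\ S^{-1}M^T & S^{-1}} = \mat{M \\ I} S^{-1} \mat{M^T & I}.}
With this identity in hand the two assertions separate cleanly.

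For positive-semidefiniteness I would observe that $L$ is positive definite and its principal submatrix $L_{11}$ is therefore positive definite too, so the Schur complement $S$ is positive definite and $S^{-1}\succ 0$. The right-hand side is then of the form $BS^{-1}B^T$ with $B=\mat{M\\I}$, and any such matrix is positive-semidefinite since $\v{x}^TBS^{-1}B^T\v{x}=(B^T\v{x})^TS^{-1}(B^T\v{x})\geq 0$.

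For entrywise nonnegativity it suffices to establish $M\geq 0$ and $S^{-1}\geq 0$, because then $B=\mat{M\\I}\geq 0$ and the product of entrywise-nonnegative matrices $BS^{-1}B^T$ is nonnegative. The factor $S^{-1}$ is exactly the lower-right block of $L^{-1}$, so $S^{-1}\geq 0$ is immediate from Lemma~\ref{4} applied to $L$. For $M=L_{11}^{-1}(-L_{12})$, the factor $-L_{12}\geq 0$ by the sign condition (\ref{ppt1}), so it remains to argue $L_{11}^{-1}\geq 0$ by invoking Lemma~\ref{4} for $L_{11}$.

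The main obstacle is therefore checking that the submatrix $L_{11}$ inherits hypotheses (\ref{ppt1}-\ref{ppt4}) so that Lemma~\ref{4} applies to it. The sign condition (\ref{ppt1}) and symmetry pass to any principal submatrix verbatim, and positive-definiteness—hence nonsingularity (\ref{ppt4})—of $L_{11}$ follows from that of $L$. The one step requiring care is the row-sum condition (\ref{ppt3}): for an index $i$ in the first block, the $i$-th row sum of $L_{11}$ equals the full $L$-row sum $\sum_j l_{ij}$ minus the entries $l_{ij}$ lying in the second block, and since every such off-diagonal entry satisfies $l_{ij}\leq 0$ by (\ref{ppt1}), deleting the second-block columns only raises the row sum. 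Thus the row sums of $L_{11}$ are at least those of $L$ and remain nonnegative, (\ref{ppt3}) holds, and Lemma~\ref{4} yields $L_{11}^{-1}\geq 0$, which closes the argument.
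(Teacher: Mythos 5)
Your proof is correct and takes essentially the same route as the paper's: the identical block-inversion/Schur-complement factorization $\mat{M\\I}S^{-1}\mat{M^T & I}$, positive-semidefiniteness read off from positive definiteness of $S^{-1}$, and entrywise nonnegativity obtained by combining Lemma~\ref{4} (applied to $L$ for the block $S^{-1}$ and to $L_{11}$) with the sign rule (\ref{ppt1}). The only difference is that you explicitly verify that $L_{11}$ inherits the hypotheses needed to invoke Lemma~\ref{4} (in particular the row-sum condition (\ref{ppt3})), a step the paper merely asserts.
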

\begin{proof}
By block matrix inversion theorem, 
\eqn{L^{-1} - \mat{L_{11}&0\\0&0} = \mat{-L_{11}^{-1}L_{12} \\ I}(L_{22}-L_{21}L_{11}^{-1}L_{12})^{-1}\mat{-L_{21}L_{11}^{-1} & I}\label{BIT}}
By assumption (\ref{ppt4}), $L^{-1}$ is positive-definite, so is its lower right corner $(L_{22}-L_{21}L_{11}^{-1}L_{12})^{-1}$. Thus, $L^{-1} - \mat{L_{11}&0\\0&0}$ is positive-semidefinite.

By Lemma 1, $L^{-1}\geq0$ and this implies that its lower right $(L_{22}-L_{21}L_{11}^{-1}L_{12})^{-1}\geq0$. The submatrix $L_{11}$ also satisfies (\ref{ppt1}-\ref{ppt4}) and by Lemma 1, $L_{11}^{-1}\geq0$. By sign rule (\ref{ppt1}), $(-L_{12})=(-L_{21})^T\geq0$. Now that every term on  the right side of (\ref{BIT}) is nonnegative, the left side also has to be.
\end{proof}

\begin{lemma}[Monotonicity]
For  function $R_\Delta(\L)$ defined in \S~2.3, $R_\Delta(\L_1\cup\L_2)\geq R_\Delta(\L_1)$, $\forall \L_1,\L_2$.
\end{lemma}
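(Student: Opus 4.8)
The plan is to first reduce the claim to a statement purely about $R$. Since $R_\Delta(\L)=R(\emptyset)-R(\L)$, the assertion $R_\Delta(\L_1\cup\L_2)\geq R_\Delta(\L_1)$ is equivalent to $R(\L_1\cup\L_2)\leq R(\L_1)$; that is, querying additional nodes never increases the trace-of-covariance risk. I would prove this monotone-decreasing property of $R$ as a direct consequence of Lemma 2.

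Write $\U_1=\V-\L_1$ and $\U_{12}=\V-(\L_1\cup\L_2)$, so that $\U_{12}\subseteq\U_1$ and the newly labelled indices form $\L_2'=\U_1\setminus\U_{12}$. The key step is to apply Lemma 2 not to $L$ itself but to its principal submatrix $L_{\U_1}$, partitioned so that block one corresponds to $\U_{12}$ and block two to $\L_2'$; the top-left block is then exactly $L_{\U_{12}}$. Since a principal submatrix of $L$ inherits the sign pattern (\ref{ppt1}), the diagonal dominance (\ref{ppt3}), and the positive-definiteness (\ref{ppt4}) --- exactly as already invoked inside the proof of Lemma 2 --- Lemma 2 applies and gives, writing $A\succeq B$ for ``$A-B$ positive-semidefinite'',
\[
L_{\U_1}^{-1}\;\succeq\; \mat{L_{\U_{12}}^{-1} & 0 \\ 0 & 0}.
\]

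It remains to incorporate the test selection $T$. Because test nodes are never queried, the test indices $\T$ lie inside the common unlabelled block $\U_{12}$, so the selection matrix for $\L_1$ satisfies $T\,L_{\U_1}^{-1}\,T^T=(L_{\U_1}^{-1})_{\T,\T}$, while conjugating the block-diagonal right-hand side by the same $T$ returns $(L_{\U_{12}}^{-1})_{\T,\T}$, which is precisely the integrand of $R(\L_1\cup\L_2)$. Conjugation by $T$ preserves the positive-semidefinite order, so $(L_{\U_1}^{-1})_{\T,\T}\succeq(L_{\U_{12}}^{-1})_{\T,\T}$; taking traces (monotone under $\succeq$) gives $R(\L_1)\geq R(\L_1\cup\L_2)$, the desired inequality. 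In the default case $T=I$ the conclusion is even more immediate: the diagonal entries of $L_{\U_1}^{-1}$ indexed by $\L_2'$, which are dropped when passing to the smaller trace, are nonnegative by Lemma 1, and the remaining diagonal comparison is again supplied by the displayed inequality.

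I expect the only real obstacle to be the bookkeeping around $T$ rather than anything analytically deep: one must verify that the test set survives both query sets so that the single inequality from Lemma 2 specializes simultaneously to $R(\L_1)$ and $R(\L_1\cup\L_2)$. The subordinate point --- that $L_{\U_1}$ genuinely satisfies (\ref{ppt1}-\ref{ppt4}) so that Lemma 2 is applicable --- is routine, since principal submatrices of the positive-definite $L$ remain positive-definite and retain both the off-diagonal sign constraint and the diagonal dominance that drive Lemmas 1 and 2.
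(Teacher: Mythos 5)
Your proposal is correct and takes essentially the same approach as the paper: the paper's entire proof of this lemma is the single line ``Direct application of Lemma 2,'' and your write-up spells out precisely that application --- Lemma 2 invoked on the principal submatrix $L_{(\V-\L_1)}$, partitioned into the blocks indexed by $\V-(\L_1\cup\L_2)$ and $\L_2\setminus\L_1$ (whose inheritance of (\ref{ppt1}-\ref{ppt4}) the paper likewise asserts inside the proof of Lemma 2), followed by the trace and $T$-conjugation bookkeeping. There is nothing different to compare.
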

\begin{proof}
Direct application of Lemma 2.
\end{proof}


\begin{lemma}[Submodularity]
For  function $R_\Delta(\L)$ defined in \S~2.3, $R_\Delta(\L_1\cup\{v\}) - R_\Delta(\L_1) \geq R_\Delta(\L_1\cup\L_2\cup\{v\})-R_\Delta(\L_1\cup\L_2) $, $\forall \L_1,\L_2, v$.
\end{lemma}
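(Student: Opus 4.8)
The plan is to rewrite submodularity as a diminishing-returns statement about the raw risk and then localize it to a single test node, where it becomes the absence-of-suppressor inequality. Since $R_\Delta(\L)=R(\emptyset)-R(\L)$, the claimed inequality~(\ref{eqn:submod3}) is equivalent to
\[
R(\L_1)-R(\L_1\cup\{v\})\ \geq\ R(\L_1\cup\L_2)-R(\L_1\cup\L_2\cup\{v\}),
\]
i.e.\ the marginal gain of querying $v$ is antitone in the set already queried. Writing $\Sigma:=L_{(\V-\L_1)}^{-1}$ and $\Gamma:=L_{(\V-(\L_1\cup\L_2))}^{-1}$ for the two relevant predictive covariances (both are inverses of principal submatrices of the same $L$, hence nonnegative and positive definite by Lemmas 1 and 2), each side is a trace over the fixed test block $\T$, where I take $v\notin\T$ so that $i$ and $v$ both sit in the unlabeled index set throughout.

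First I would compute the effect of the single observation $v$. Conditioning a Gaussian on one further coordinate is a rank-one Schur update, so for every $i\in\T$ the variance of node $i$ drops by $\Sigma_{iv}^2/\Sigma_{vv}$; summing over $\T$ gives $R(\L_1)-R(\L_1\cup\{v\})=\sum_{i\in\T}\Sigma_{iv}^2/\Sigma_{vv}$, and likewise with $\Gamma$ on the right. Hence it suffices to prove the per-node inequality
\[
\frac{\Sigma_{iv}^2}{\Sigma_{vv}}\ \geq\ \frac{\Gamma_{iv}^2}{\Gamma_{vv}}\qquad\text{for every }i\in\T .
\]
Factoring each ratio as (squared correlation)$\times$(variance of $i$), so that $\Sigma_{iv}^2/\Sigma_{vv}=\rho_\Sigma(i,v)^2\,\Sigma_{ii}$ and similarly for $\Gamma$, and using that the predictive variance is monotone, $\Sigma_{ii}\geq\Gamma_{ii}\geq0$ (this is exactly Lemmas 2 and 3), the whole claim collapses to the absence-of-suppressor inequality $\rho_\Gamma(i,v)^2\leq\rho_\Sigma(i,v)^2$: additionally conditioning on the block $\L_2$ never increases the magnitude of the $i$--$v$ correlation.

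The crux, and the main obstacle, is this last correlation inequality. It is false for general Gaussians (a genuine suppressor can inflate a partial correlation) and even for arbitrary nonnegative covariances, so the proof must use the full $M$-matrix structure, not merely $\Sigma\geq0$. I would establish it from the random-walk representation already built in the proof of Lemma 1: writing $L=D(I-P)$ with $P=D^{-1}W\geq0$ substochastic, the submatrix inverse is the Green's function of the weighted walk killed outside the unlabeled set, and the standard identity $(L_\U^{-1})_{iv}/(L_\U^{-1})_{vv}=h_\U(i\!\to\!v)$, the probability of reaching $v$ before being killed, yields the clean formula $\rho_\Sigma(i,v)^2=h_\U(i\!\to\!v)\,h_\U(v\!\to\!i)$. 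Passing from $\U=\V-\L_1$ to $R=\V-(\L_1\cup\L_2)$ only enlarges the killing set, while the one-step transition probabilities $w_{xy}/d_x$ are unchanged, so each hitting probability becomes a sub-sum over the same nonnegative path weights; therefore $h_R\leq h_\U$ and $\rho_\Gamma^2\leq\rho_\Sigma^2$.

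An equivalent route is to reduce $\L_2$ to a single extra node and induct, turning the correlation bound into the purely algebraic inequality $(s_{iv}s_{kk}-s_{ik}s_{kv})^2 s_{ii}s_{vv}\leq s_{iv}^2(s_{ii}s_{kk}-s_{ik}^2)(s_{vv}s_{kk}-s_{kv}^2)$ in the entries $s_{ab}=\Sigma_{ab}\geq0$. I expect this to be the only genuinely delicate step, precisely because one must invoke the sign pattern of $L$ (equivalently, nonnegativity of $W$) rather than nonnegativity of $\Sigma$ alone; once $\rho_\Gamma^2\leq\rho_\Sigma^2$ is in hand, the per-node inequality, the sum over $\T$, and the reduction from~(\ref{eqn:submod3}) are all immediate.
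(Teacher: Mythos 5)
Your proposal is correct, and it diverges from the paper's own proof exactly at the step you call the crux. Both arguments begin identically: rewrite submodularity as diminishing returns of the raw risk and compute the marginal gain of querying $v$ as a rank-one Schur update; your $\sum_{i}\Sigma_{iv}^2/\Sigma_{vv}$ is just the paper's expression (\ref{submod:c1}), $\bigl(1+b^TA^{-2}b\bigr)/\bigl(c-b^TA^{-1}b\bigr)$, written entry by entry. The difference is how the per-node term is shown to shrink when $\L_2$ is added. The paper in effect factors it as $(\Sigma_{iv}/\Sigma_{vv})^2\cdot\Sigma_{vv}$ (squared regression coefficient times the variance of $v$) and proves both factors antitone by two purely algebraic, entrywise M-matrix inequalities: $-b\geq\mat{-\t{b}\\0}\geq0$ from the sign rule (\ref{ppt1}), and $A^{-1}\geq\mat{\t{A}^{-1}&0\\0&0}\geq0$ from Lemma 2, which give $A^{-1}(-b)\geq\t{A}^{-1}(-\t{b})\geq0$ for the numerator and $c-b^TA^{-1}b\leq c-\t{b}^T\t{A}^{-1}\t{b}$ for the denominator. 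You instead factor it as $\rho(i,v)^2\cdot\Sigma_{ii}$, take $\Sigma_{ii}\geq\Gamma_{ii}$ from Lemma 2, and prove the correlation inequality --- which is precisely the AofS statement, Theorem 2 of the paper --- via the killed-random-walk representation $\Sigma_{iv}/\Sigma_{vv}=h_\U(i\to v)$, so that $\rho^2$ is a product of two hitting probabilities, each of which can only decrease when the killing set is enlarged. This reverses the paper's logical order: the paper proves submodularity directly and then obtains AofS (Theorem 2) as a corollary of the same entrywise inequalities (multiplying $\Sigma_{iv}/\Sigma_{vv}\geq\Gamma_{iv}/\Gamma_{vv}\geq0$ with its counterpart normalized by $\Sigma_{ii}$), whereas you prove AofS first and deduce submodularity from it. Your route buys a transparent probabilistic interpretation and delivers the stronger per-pair correlation monotonicity up front; its cost is the random-walk machinery (convergence of the Neumann series for the killed walk, already implicit in the proof of Lemma 1, plus the renewal identity $G_{iv}=h(i\to v)\,G_{vv}$), where the paper needs only a two-line comparison of numerator and denominator. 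One remark: the ``genuinely delicate'' correlation step you flag can also be dispatched without any walk, by multiplying the two regression-coefficient inequalities exactly as the paper's Theorem 2 does, using the same Lemma 2 facts you already invoke for $\Sigma_{ii}\geq\Gamma_{ii}$.
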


\begin{proof}
We may assume that $\L_1,\L_2,\mbox{ and }\{v\}$ are disjoint. Without loss of generality, suppose 
\eqn{L_{(\V-\L_1)} =& \left(\begin{array}{cc|c}L_{(\V-\L_1\cup\L_2\cup\{v\})} & L_{(\V-\L_1\cup\L_2\cup\{v\}),\L_2} & L_{(\V-\L_1\cup\L_2\cup\{v\}),\{v\}}\\
L_{\L_2 ,(\V-\L_1\cup\L_2\cup\{v\})} & L_{\L_2} & L_{\L_2,\{v\}}\\\hline
L_{\{v\},(\V-\L_1\cup\L_2\cup\{v\})} & L_{\{v\},\L_2} & L_{\{v\}}
\end{array}\right) \label{partition1}
}\eqn{
:=&\left(\begin{array}{ll|l}
\tilde{A} & Y & \tilde{b} \\ Y^T & Z & e \\\hline \tilde{b}^T & e^T & c
\end{array}\right) := \left(\begin{array}{l|l}
A & b \\\hline b^T & c 
\end{array}\right) \label{partition2}
\\ \hspace{-5em} \mbox{and }
L_{(\V-\L_1\cup\L_2)} =& \left(\begin{array}{c|c}L_{(\V-\L_1\cup\L_2\cup\{v\})}  & L_{(\V-\L_1\cup\L_2\cup\{v\}),\{v\}}\\\hline
L_{\{v\},(\V-\L_1\cup\L_2\cup\{v\})} & L_{\{v\}}
\end{array}\right)
= \left(\begin{array}{l|l}
\tilde{A} & \tilde{b} \\\hline \tilde{b}^T & c 
\end{array}\right).  \label{partition3}
}

Apply block matrix inversion theorem, when a test set is not specified, i.e. $T=I$ of size $|\U|$, 
\eqn{
&R_{\Delta}(\L_1\cup\{v\}) - R_{\Delta}(\L_1)
=
 R(\L_1) - R(\L_1\cup\{v\}) 
=  tr\left(\mat{A &b\\b^T & c}^{-1} - \mat{A^{-1} & 0\\0&0}\right) \nonumber\\
&=  tr\left(\mat{-A^{-1}b\\1}\frac{1}{c-b^TA^{-1}b}\mat{-b^TA^{-1} , 1}\right)
=  \frac{1}{c-b^TA^{-1}b}\mat{-b^TA^{-1} , 1}\mat{-A^{-1}b\\1} \label{submod:c1}
}
Similarly, $\displaystyle
R_{\Delta}(\L_1\cup\L_2\cup\{v\}) - R_{\Delta}(\L_1\cup\L_2)
=  \frac{(-\tilde{b}^T\tilde{A}^{-1})(\tilde{A}^{-1}(-\tilde{b}))+1}{c-(-\tilde{b})^T\tilde{A}^{-1}(-\tilde{b})}.
$

Notice that by sign rule (\ref{ppt1}), $-b\geq\mat{-\tilde{b}\\0}\geq0$ and by Lemma 2, $A^{-1}\geq \mat{\tilde{A}^{-1} & 0 \\ 0 & 0}\geq0$. Thus, $(-b^T)A^{-1}(-b)\geq\mat{-\tilde{b}^T,0}\mat{\tilde{A}^{-1} & 0 \\ 0 & 0}\mat{-\tilde{b}\\0}=(-\tilde{b}^T)\tilde{A}^{-1}(-\tilde{b})\geq0$ and
$A^{-1}(-b)\geq\mat{\tilde{A}^{-1} & 0 \\ 0 & 0}\mat{-\tilde{b}\\0}=\tilde{A}^{-1}(-\tilde{b})\geq0$. 
The proof when a test set $\T$ is specified is fundamentally similar because the indicator matrix $T$ is always applied to nonnegative vectors or matrices.
\end{proof}

\begin{theorem}[(1-1/e) Bound]
The  function $R_\Delta(\L)$ defined in \S~2.3  is normalized (by definition), monotone (by lemma 3), and submodular (by lemma 4). Therefore, (\ref{eqn:1-1/e}) can be established. \qed
\end{theorem}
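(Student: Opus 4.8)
The plan is to treat this theorem as a packaging result: once $R_\Delta$ is certified to be normalized, monotone, and submodular, the guarantee (\ref{eqn:1-1/e}) follows from the classical greedy analysis [6], so the work reduces to checking the three hypotheses and feeding them into the machinery. First I would dispatch normalization, which is immediate from the definition in \S~2.3, namely $R_\Delta(\emptyset)=R(\emptyset)-R(\emptyset)=0$; combined with monotonicity this also forces $R_\Delta\geq 0$ everywhere, so the set function lives in the regime the greedy theorem requires. Monotonicity, inequality (\ref{eqn:submod2}), is exactly Lemma~3, and submodularity, inequality (\ref{eqn:submod3}), is exactly Lemma~4; no new estimate is needed beyond invoking them.

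Second, for the unit-cost version of (\ref{eqn:problem}) (a pure cardinality constraint) I would quote Nemhauser--Wolsey--Fisher [6] directly: for a normalized monotone submodular $f$, the greedy set $S_k$ of size $k$ obeys $f(S_k)\geq\big(1-(1-1/k)^k\big)f(\L_*)\geq(1-1/e)f(\L_*)$, which is precisely (\ref{eqn:1-1/e}) with $f=R_\Delta$ when every $c_v=1$.

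Third, to cover the nonuniform-cost rule of Algorithm~\ref{algo:greedy}, I would run the cost--benefit (density) analysis. Writing $B=\sum_{v\in\L_g}c_v$ for the budget greedy actually spends and $S_i$ for its first $i$ picks, monotonicity plus submodularity give $R_\Delta(\L_*)-R_\Delta(S_{i-1})\leq\sum_{v\in\L_*\setminus S_{i-1}}\big(R_\Delta(S_{i-1}\cup\{v\})-R_\Delta(S_{i-1})\big)$; bounding each marginal gain by its cost times the best available density and using that greedy maximizes that density yields the one-step contraction $R_\Delta(\L_*)-R_\Delta(S_i)\leq(1-c_{v_i}/B)\big(R_\Delta(\L_*)-R_\Delta(S_{i-1})\big)$. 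Unrolling and applying AM--GM, $\prod_i(1-c_{v_i}/B)\leq\big(1-\tfrac1k\sum_i c_{v_i}/B\big)^k$, produces the factor $(1-1/e)$; the key accounting point is that $\L_*$ is defined against the budget $B$ that greedy itself exhausts, so $\sum_i c_{v_i}=B$, the exponent equals $1$, and the bound is tight.

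The main obstacle lives entirely in this last step: the density inequality $R_\Delta(S_i)-R_\Delta(S_{i-1})\geq(c_{v_i}/B)\big(R_\Delta(\L_*)-R_\Delta(S_{i-1})\big)$ presumes greedy is free to select the best-density element of $\L_*\setminus S_{i-1}$, whereas Algorithm~\ref{algo:greedy} only ranges over elements still fitting in the shrinking residual budget. An expensive $\L_*$-element (cost up to $B$) may not fit, so the naive per-step bound can fail; I would close this gap by the standard remedy---either also comparing against the single best feasible element (Khuller--Moss--Naor) or invoking the partial-enumeration refinement (Sviridenko) that restores the full $(1-1/e)$---and state precisely which form of [6] is being used. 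By contrast, the specified-test-set case $T\neq I$ needs no separate treatment, since, exactly as at the end of Lemma~4, $T$ only ever multiplies nonnegative matrices and hence preserves every inequality above.
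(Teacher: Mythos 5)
The paper's entire proof is your first paragraph: it certifies $R_\Delta$ as normalized (by definition), monotone (Lemma 3), and submodular (Lemma 4), and then cites [6] for (\ref{eqn:1-1/e}); it never unpacks the greedy analysis at all. So your packaging step matches the paper exactly, and your unit-cost Nemhauser--Wolsey--Fisher paragraph is the standard filling-in of what the citation is asked to do under a pure cardinality constraint.

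Your third paragraph goes beyond the paper, and the obstacle you isolate is genuine --- indeed it is a gap in the paper's claim, not in your reasoning. The implication ``normalized $+$ monotone $+$ submodular $\Rightarrow$ Algorithm~\ref{algo:greedy} achieves (\ref{eqn:1-1/e})'' is false for nonuniform costs: take a modular (hence submodular) gain on three items of values $0.51$, $1$, $0.01$ with costs $1$, $2$, $1$ and budget $C=2$. Algorithm~\ref{algo:greedy} first takes the density-$0.51$ item, after which the cost-$2$ item no longer fits the residual budget, so it finishes with the junk item: $R_\Delta(\L_g)=0.52$ with spent budget $2$, while $\L_*$ under budget $2$ is the cost-$2$ item with $R_\Delta(\L_*)=1$, and $0.52 < 1-1/e$. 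This is precisely the feasibility-filter failure you describe. Since both your argument and the paper's rely only on the three set-function properties, this defeats the proof strategy for Algorithm~\ref{algo:greedy} as written; rescuing the specific GRF risk would need structure beyond submodularity. Two caveats on your proposed repairs: the Khuller--Moss--Naor best-singleton comparison yields only $(1-1/e)/2$, and Sviridenko's partial enumeration changes the algorithm, so neither proves the theorem as stated. The repair that actually matches the paper's unusual comparator ($\L_*$ constrained to greedy's \emph{spent} budget) is the one in [6] (Streeter--Golovin): their density greedy has \emph{no} residual-budget filter (it may overshoot $C$), and then the anytime guarantee $R_\Delta(S_j)\geq\left(1-e^{-\ell_j/B}\right)\max_{\,c(\L)\leq B}R_\Delta(\L)$, taken at $B=\ell_j=$ greedy's expenditure, gives exactly (\ref{eqn:1-1/e}). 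In short: the theorem is sound for unit costs (your second paragraph suffices) and for the filterless greedy of [6], but not for Algorithm~\ref{algo:greedy} as written with general costs; your audit makes visible what the paper's bare citation hides.
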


\begin{definition}
Since the conditional covariance of a GRF model is $L_{\L}^{-1}$, we can properly define the corresponding conditional correlation to be \eqn{Corr(\v{y}_\U|\L) = \left(\mbox{diag}(L_{(\V-\L)}^{-1})^{-\frac{1}{2}}\right)L_{(\V-\L)}^{-1}\left(\mbox{diag}(L_{(\V-\L)}^{-1})^{-\frac{1}{2}}\right)}
\end{definition}

\begin{theorem}[AofS]
$Corr(y_i,y_j|\L_1)\geq Corr(y_i,y_j| \L_1\cup\L_2), \forall \L_1, \L_2 , \forall v_i,v_j \not\in \L_1\cup\L_2.$
\end{theorem}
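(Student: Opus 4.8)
\emph{Plan.} Since $\L_2=\{k_1,\dots,k_m\}$ can be adjoined to the conditioning set one node at a time, I would first reduce to the single-node case. Setting up the chain $\L_1=S_0\subset S_1\subset\cdots\subset S_m=\L_1\cup\L_2$ with $S_t=S_{t-1}\cup\{k_t\}$, it suffices to show $Corr(y_i,y_j\mid S_{t-1})\ge Corr(y_i,y_j\mid S_t)$ at each step and then chain the inequalities. So fix a conditioning set, call it $\L_1$, write $\Sigma=L_{(\V-\L_1)}^{-1}$ for the current conditional covariance, let $k$ be the single extra node, and abbreviate the three relevant conditional correlations computed from $\Sigma$ via Definition~1 by $a=Corr(y_i,y_j\mid\L_1)$, $b=Corr(y_i,y_k\mid\L_1)$, $c=Corr(y_j,y_k\mid\L_1)$. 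By Lemma~1 all entries of $\Sigma$ are nonnegative, so $a,b,c\ge 0$, and since $\Sigma$ is positive definite they lie in $[0,1)$.

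Next I would express the post-conditioning correlation. Conditioning the Gaussian on the single variable $k$ replaces $\Sigma$ by its Schur complement with respect to the $k$-th row and column, which yields the classical partial-correlation identity $Corr(y_i,y_j\mid\L_1\cup\{k\})=(a-bc)/\sqrt{(1-b^2)(1-c^2)}$, with positive denominator since $b,c<1$. The theorem thus reduces to the scalar inequality $a\ge(a-bc)/\sqrt{(1-b^2)(1-c^2)}$.

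The essential point, and the step I expect to be the main obstacle, is that positive-definiteness and entrywise nonnegativity of $\Sigma$ \emph{alone} do not imply this inequality: one can exhibit a nonnegative positive-definite correlation matrix (e.g. correlations $0.94,0.8,0.6$) that violates it, so the sign structure (\ref{ppt1}) of $L$ is indispensable. I would extract exactly the needed consequence of (\ref{ppt1}) by applying Lemma~1 not only to $L_{(\V-\L_1-\{k\})}^{-1}$ but also to $L_{(\V-\L_1-\{j\})}^{-1}$ and $L_{(\V-\L_1-\{i\})}^{-1}$. Each of these is a principal submatrix of $L$ and hence itself satisfies (\ref{ppt1}-\ref{ppt4}) (the diagonal only gains dominance when rows and columns are deleted, exactly as argued inside the proof of Lemma~2), so each has nonnegative entries. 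Reading off the relevant off-diagonal entry and translating it into the correlations of $\Sigma$ gives the three constraints $a\ge bc$, $b\ge ac$, and $c\ge ab$; these are precisely the inequalities that forbid suppressor configurations.

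Finally I would prove the scalar inequality from these constraints. If $a\le bc$ the right-hand side is $\le 0\le a$ and we are done, so assume $a>bc$; both sides are then nonnegative and the claim is equivalent to $bc\ge a\bigl(1-\sqrt{(1-b^2)(1-c^2)}\bigr)$. By the symmetry $i\leftrightarrow j$ (which swaps $b$ and $c$ while fixing $a$) I may assume $b\le c$; the degenerate case $c=0$ forces $b=0$ and is immediate, so take $c>0$. The constraint $b\ge ac$ gives $a\le b/c$, whence $a\bigl(1-\sqrt{(1-b^2)(1-c^2)}\bigr)\le(b/c)\bigl(1-\sqrt{(1-b^2)(1-c^2)}\bigr)$, and it remains to check $(b/c)\bigl(1-\sqrt{(1-b^2)(1-c^2)}\bigr)\le bc$, i.e. $\sqrt{(1-b^2)(1-c^2)}\ge 1-c^2$. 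Squaring the two nonnegative quantities, this is $1-b^2\ge 1-c^2$, which is exactly $b\le c$. Chaining the single-node inequality back along $S_0,\dots,S_m$ then establishes $Corr(y_i,y_j\mid\L_1)\ge Corr(y_i,y_j\mid\L_1\cup\L_2)$.
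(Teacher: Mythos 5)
Your proof is correct, but it takes a genuinely different route from the paper's. The paper never reduces to single-node conditioning: it removes all of $\L_2$ at once, reusing the block-inversion computation and the key inequality $-b^TA^{-1}\geq-\t{b}^T\t{A}^{-1}\geq0$ already established inside the proof of Lemma~4. The identity $d/e=-b^TA^{-1}$ identifies the ratio $(L_{(\V-\L_1)}^{-1})_{ij}/(L_{(\V-\L_1)}^{-1})_{jj}$ as a regression coefficient, so the paper gets the monotonicity statement (\ref{eqn:suppress1}) for both normalizations $\Sigma_{ij}/\Sigma_{jj}$ and $\Sigma_{ij}/\Sigma_{ii}$, and concludes by multiplying the two and taking square roots. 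You instead chain over one node at a time, invoke the classical partial-correlation recursion $Corr(y_i,y_j\mid\L_1\cup\{k\})=(a-bc)/\sqrt{(1-b^2)(1-c^2)}$, extract the three scalar constraints $a\geq bc$, $b\geq ac$, $c\geq ab$ by applying Lemma~1 to the principal submatrices $L_{(\V-\L_1-\{k\})}$, $L_{(\V-\L_1-\{j\})}$, $L_{(\V-\L_1-\{i\})}$ (legitimate for the same reason the paper invokes inside Lemma~2: signs, diagonal dominance, and positive definiteness are inherited by principal submatrices), and finish with an elementary scalar inequality. What your route buys: it isolates exactly what the sign structure (\ref{ppt1}) contributes --- your counterexample with correlations $0.94,0.8,0.6$ (which indeed violates $c\geq ab$) shows that entrywise nonnegativity plus positive definiteness alone cannot give the theorem, a point the paper's proof leaves implicit; the final step is also purely scalar and self-contained. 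What the paper's route buys: no induction or chaining is needed, the argument piggybacks on work already done for submodularity, and it yields the slightly stronger intermediate fact that the regression coefficients themselves (not just the correlations) decrease monotonically under enlarging the conditioning set. One small point of care in your write-up: the step dividing by $b/c$ silently assumes $b>0$, but as you can note, if $b=0$ and $a>bc$ then $b\geq ac$ forces $c=0$, contradicting $c>0$, so that case is vacuous; this does not affect correctness.
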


\begin{proof}
We may assume that $\L_1$ and $\L_2$ are disjoint. Adopt the notations from (\ref{partition1}-\ref{partition3}). Now,
\eqn{
\mat{C&d\\d^T&e} :=  \mat{A & b\\b^T & c}^{-1} = \mat{A^{-1} & 0 \\ 0 & 0} +  \mat{\frac{A^{-1}bb^TA^{-1}}{c-b^TA^{-1}b} & \frac{-A^{-1}b}{c-b^TA^{-1}b} \\ \frac{-b^TA^{-1}}{c-b^TA^{-1}b} & \frac{1}{c-b^TA^{-1}b}}.
} 
Divide vector $d$ by diagonal number $e$ yields
\eqn{
\frac{1}{e}\cdot d = \left. \frac{-b^TA^{-1}}{c-b^TA^{-1}b} \middle/ \frac{1}{c-b^TA^{-1}b} \right.  
= -b^TA^{-1}.
} 
As we have proved in Lemma 4, $-b^TA^{-1}\geq-\t{b}^T\t{A}^{-1}\geq0$, i.e., 
\eqn{
\frac{(L_{(\V-\L_1)}^{-1})_{ij}}{(L_{(\V-\L_1)}^{-1})_{jj}}
\geq 
\frac{(L_{(\V-\L_1\cup\L_2)}^{-1})_{ij}}{(L_{(\V-\L_1\cup\L_2)}^{-1})_{jj}}\geq0 \quad \forall v_i,v_j\not\in\L_1\cup\L_2. \label{eqn:suppress1}
}
Similarly, 
 $\displaystyle
\frac{(L_{(\V-\L_1)}^{-1})_{ij}}{(L_{(\V-\L_1)}^{-1})_{ii}}
\geq 
\frac{(L_{(\V-\L_1\cup\L_2)}^{-1})_{ij}}{(L_{(\V-\L_1\cup\L_2)}^{-1})_{ii}}\geq0.$
It suffices to multiply both sides of the above. 
\end{proof}

\section{Extension to Active Survey \arxivVer{and Tricks to Improve Efficiency}} \label{chpt:survey}

In an active survey problem[7], our goal is to actively query points to ultimately predict the proportion of a given class. Embedded in GRF models, it changes the loss function to the Mean Squared Error (MSE)  $L_s(\v{y}_\T,\v{f}_\T) = (\sum_{v_{t_i}\in\T}(y_{t_i} - f_{t_i}))^2$ and the risk to $R_s(\L) = \v{1}^TTL_{(\V-\L)}^{-1}T^T\v{1}$. 

For the subset selection problem with this new objective function, all results in \S~3 hold and the proofs are similar. Besides, we developed an algorithm with $\mathcal{O}(N^{2.36}+kN^2)$ runtime complexity for $k$ queries ($\mathcal{O}{(kN^{3.36})}$ if implemented natively) similar to [2]%
\switchVer{, detailed in our longer version [9].}{.
\subsection{The Active Surveying Problem and The Proofs}
Similar to (\ref{eqn:problem}), define the subset selection (active surveying) problem as 
\eqn{
\textstyle\arg  \min_{\mathcal{L}}  &\textstyle \quad R(\L)=R_s(\L)=\v{1}^TTL_{(\V-\L)}^{-1}T^T\v{1}   \nonumber\\
s.t. &\textstyle \quad \sum_{v\in \mathcal{L}}  c_v \leq C. \label{eqn:problem2}
}
We also assume (\ref{ppt1}-\ref{ppt4}) and $R_\Delta(\L):=R(\emptyset)-R(\L)$. To prove Theorem 1 via Lemma 2 and 3, the only adjustment is with (\ref{submod:c1}), 
\eqn{
&R_{\Delta}(\L_1\cup\{v\}) - R_{\Delta}(\L_1)
=
 R_s(\L_1) - R_s(\L_1\cup\{v\})  
=  \v{1}^TT\left(\mat{A &b\\b^T & c}^{-1} - \mat{A^{-1} & 0\\0&0}\right)T\v{1} \nonumber\\
=&  \v{1}^TT\left(\mat{-A^{-1}b\\1}\frac{1}{c-b^TA^{-1}b}\mat{-b^TA^{-1} , 1}\right)T\v{1}
=  \frac{1}{c-b^TA^{-1}b}\left(\v{1}^TT\mat{-A^{-1}b\\1}\right)^2 .\label{submod:s1}
}
Still, because $-b\geq-\t{b}\geq0$, $A^{-1}\geq\mat{\t{A}^{-1} & 0 \\0&0}\geq0$, and $T\geq\mat{\t{T}, 0}\geq0$, the above is larger than its counterpart in $R_\Delta(\L_1\cup\L_2\cup\{v\})-R_\Delta(\L_1\cup\L_2)$. \qed

\subsection{Tricks to Improve Efficiency: With Precomputed Covariance}

In Algo 1, the most time-consuming step is to compute $R(\L\cup\{v'\})$ for every possible $v'\in\mathcal{P}$, which in general involves taking the inverse of $L_{(\V-\L\cup\{v'\})}$. Zhu et. al. [5] presented a fast way to do this. Actually it can get even faster in the following way, assuming 
 $L_{(\V-\L\cup\{v'\})}^{-1} = \Sigma'=A^{-1}$, $L_{(\V-\L)}^{-1} = \Sigma =\mat{A&b\\b^T&c}^{-1}=\mat{C&d\\d^T&e}$, and $\Sigma_{*v'}$ to be the last column of $\Sigma$ ,
\eqn{
& \mat{C&d\\d^T&e} = \mat{A^{-1} & 0 \\ 0 & 0} +  \mat{\frac{A^{-1}bb^TA^{-1}}{c-b^TA^{-1}b} & \frac{-A^{-1}b}{c-b^TA^{-1}b} \\ \frac{-b^TA^{-1}}{c-b^TA^{-1}b} & \frac{1}{c-b^TA^{-1}b}}\\
\Rightarrow \quad & \mat{A^{-1}&0\\0&0} = \mat{C&d\\d^T&e}  - \frac{1}{e}\cdot \mat{d\\e}\mat{d^T,e}\\
\Rightarrow\quad & \mat{\Sigma' &0\\0&0 } = \Sigma - {\frac{1}{\Sigma_{v'v'}}\cdot \Sigma_{*v'}\Sigma_{v'*}}.\label{eqn:efficiencywithcov}
} 
In Algo 2, only linear time is needed to evaluate the marginal gain of a candidate because 
\eqn{R_c(\L\cup\{v'\})&=tr(\Sigma') = tr(\Sigma)-tr({\frac{1}{\Sigma_{v'v'}}\cdot \Sigma_{*v'}\Sigma_{v'*}}) = const - \frac{\Sigma_{v'*}\Sigma_{*v'}}{\Sigma_{v'v'}}\nonumber\\ 
R_s(\L\cup\{v'\})&=\v{1}^T\Sigma'\v{1}=\v{1}^T\Sigma\v{1}-\v{1}^T{\frac{1}{\Sigma_{v'v'}}\cdot \Sigma_{*v'}\Sigma_{v'*}}\v{1}=const-\frac{(\v{1}^T\Sigma_{*v'})^2}{\Sigma_{v'v'}} \nonumber
}

\begin{algorithm}[H]
 \caption{Fast progressive $R(\L\cup\{v'\})$ evaluation with precomputed covariance.}\label{algo:fastProgressive}
 \normalsize
 \KwIn{Labeled set $\L$, current $R(\L)$, $\Sigma$ (covariance of $\v{y}_\U$ conditioned on $\v{y}_\L$), queryable pool $\mathcal{P}$, and test set $\T\subset\U$ if applicable (otherwise $\T\leftarrow I$ of size $|\U|$).}
 \KwOut{$R(\L\cup\{v_{p_1}\}), ..., R(\L\cup\{v_{p_{|\mathcal{P}|}}\}$.}
 $T \leftarrow (\delta(v_{t_i}, v_{u_j}))_{i=1,j=1}^{|\T|,|\U|}$ , or $T \leftarrow I$ of size $|\U|$ if $\T$ not specified.\\
 \For{$v_{p_i}\in\mathcal{P}-\L$}{
 $v'\leftarrow j$ if $\displaystyle v_{u_j}= v_{p_i} $ .\\
  $R(\L\cup\{v_{p_i}\}) \leftarrow R(\L) - \displaystyle \frac{(\Sigma_{v'*}T^T)(T\Sigma_{*v'})}{\Sigma_{v'v'}}$ if classification or $R(\L) -\displaystyle \frac{(\v{1}^TT\Sigma_{*v'})^2}{\Sigma_{v'v'}}$ if survey.
  }
\end{algorithm}

\subsection{Tricks to Improve Efficiency: Singular Laplacian}

However, we still have one question unsolved---how to compute the first $L^{-1}$ when $L$ for a connected graph is singular? 

The algorithm for classification problem $\arg\min_{v_0}tr(L_{(\V-\{v_0\})}^{-1})$ has been optimized in [2]. We can follow a similar method to compute $\arg\min_{v_0}\v{1}^TL_{(\V-\{v_0\})}^{-1}\v{1}$ and also the criterion with specified test sets. Essentially, we want to avoid numerical inverse of large matrices as much as possible. In fact, both the algorithm in [2] and the following require only one eigen-decomposition of $L$ , which has the same order of complexity as matrix inversion.

\begin{definition}[First Query in Survey Problem]
Suppose $L$ satisfies (\ref{ppt1}-\ref{ppt3}), i.e. every property including connectivity but singularity. Also suppose $L$ has eigen-decomposition $L=Q\Lambda Q^T$, where $ \Lambda =  \diag{\lambda_1,  \lambda_2, ..., \lambda_N}$ with $\lambda_1=0, \lambda_k>0, \forall k\neq1$ and $Q$ is the orthogonal matrix whose every column is the regularized eigenvector corresponding to the eigenvalue in $\Lambda$. Denote the \emph{row vector representation}\footnote{ Notice this $r_i$ representation is the only row vector representation in this paper.}  of $Q$ as $Q=\mat{r_1\\r_2\\\cdots\\r_N}$  and its miss-ith-row form  $Q_{-i,*}=\mat{r_1\\\cdots\\r_{i-1}\\r_{i+1}\\\cdots\\r_N}.$ 
The first query in survey problem asks to optimize \eqn{
\arg\min_{i} R_s(\{v_i\}) = \v{1}^T L_{\V-\{v_i\}}^{-1} \v{1} 
= \v{1}^T \cdot \left( Q_{-i,*} \Lambda Q_{-i,*}^T\right)^{-1} \cdot\v{1}
}
\end{definition}
\begin{proof}[Solution (First Query in Survey Problem)]$\quad$\\
For any fixed $i$, denote (n-1)-by-n $\tilde{Q} = Q_{-i,*}$. Thus $\t{Q}\t{Q}^T=I_{N-1}$, $\t{Q}^T\t{Q} = I_N-r_i^Tr_i$, and $R_s(\{v_i\}) = \v{1}^T(\t{Q}\Lambda\t{Q}^T)^{-1}\v{1}$. Also denote $\Lambda =\mat{\lambda_1 & 0\\ 0 & \t{\LL}}= \mat{0 & 0\\ 0 & \t{\LL}}$, where $\t{\LL}$ is (n-1)-by-(n-1) nonsingular diagonal matrix and $\tilde{L} = L_{(\V-\{v_i\})} = \t{Q}\Lambda\t{Q}^T$. By matrix inversion theorem, 
\eqn{
\t{L}^{-1} &= \mat{- \t{Q}\t{Q}^T + \t{Q}(I_N+\Lambda)\t{Q}^T}^{-1} \\
&=(-\t{Q}\t{Q}^T)^{-1} - (\t{Q}\t{Q}^T)^{-1}\t{Q}\Big[(I_N+\LL)^{-1} + \t{Q}^T(-\t{Q}\t{Q}^T)^{-1}\t{Q}\Big]^{-1}\t{Q}^T(\t{Q}\t{Q}^T)^{-1}\\
& = -I_{N-1} - \t{Q}\Big[(I_N+\LL)^{-1} - \t{Q}^T\t{Q}\Big]^{-1}\t{Q}^T\\
%
%
& = -I_{N-1} - \t{Q}\Big[\Big((I_N+\LL)^{-1} - I_N\Big) +  r_i^Tr_i\Big]^{-1}\t{Q}^T
%
}
Since $L$ is a connected graph Laplacian,  the normalized eigenvector for $\lambda_1=0$ is $\mat{\frac{1}{\sqrt{N}}...\frac{1}{\sqrt{N}}}^T$. Therefore, we can denote $r_i = \mat{\frac{1}{\sqrt{N}}, \ai^T}$, where $\ai$ is $(N-1)$-dimensional. Apply matrix inversion theorem again, 
\eqn{
&\left[\Big((I_N+\LL)^{-1} - I_N\Big) +  r_i^Tr_i\right]^{-1} 
%
= \mat{\frac{1}{N} & \frac{\ai^T}{\sqrt{N}}\\\frac{\ai}{\sqrt{N}} & \underbrace{\diag{\frac{-\lambda_k}{1+\lambda_k}}_{k=2}^N}_{\tilde{M}} + \ai\ai^T}^{-1}\\
=&\mat{\frac{1}{N} & \frac{\ai^T}{\sqrt{N}}\\\frac{\ai}{\sqrt{N}} & \underbrace{\tilde{M} + \ai\ai^T}_{\tilde{B}}}^{-1}
=\mat{\frac{1}{m}  & -\frac{1}{m\sqrt{N}}\ai^T\t{B}^{-1}  \\ -\frac{1}{m\sqrt{N}}\t{B}^{-1}\ai & \t{B}^{-1} + \frac{1}{m}\t{B}^{-1}\frac{\ai\ai^T}{N}\t{B}^{-1}},
} 
where $
\t{B}^{-1}  
=\t{M}^{-1} - \t{M}^{-1}\ai\cdot\frac{1}{1 + \ai^T\t{M}^{-1}\ai}\cdot\ai^T\t{M}^{-1}$
and $m = \frac{1}{N} - \frac{1}{N}\ai^T\t{B}^{-1}\ai.$

Assign $a_i = \ai^T\t{M}^{-1}\ai$ and we have 

\eqn{
\ai^T\t{B}^{-1}\ai &= \ai^T\t{M}^{-1}\ai -\frac{(\ai^T\t{M}^{-1}\ai)^2}{1 + \ai^T\t{M}^{-1}\ai} =a_i - \frac{a_i^2}{1+a_i} = \frac{a_i}{1+a_i} \\
\frac{1}{m} &= \Big(\frac{1}{N}-\frac{1}{N}( \frac{a_i}{1+a_i})\Big)^{-1}= N(1+a_i).
}

Finally, because the first column of the orthogonal $Q$ is $\frac{1}{\sqrt{N}}\v{1}$, we have $\v{1}^TQ=\mat{\sqrt{N},\v{0}^T}$ and 

\eqn{
& R_s(\{v_i\}) =\v{1}^T\cdot\t{L}^{-1}\cdot\v{1}\\
=&-(N-1) -(\v{1}^TQ-r_i)\mat{\frac{1}{m}  & -\frac{1}{m\sqrt{N}}\ai^T\t{B}^{-1}  \\ -\frac{1}{m\sqrt{N}}\t{B}^{-1}\ai & \t{B}^{-1} + \frac{1}{m}\t{B}^{-1}\frac{\ai\ai^T}{N}\t{B}^{-1}}(\v{1}^TQ-r_i)^T \\
=&-(N-1) -\mat{\frac{N-1}{\sqrt{N}}, -\ai}\mat{\frac{1}{m}  & -\frac{1}{m\sqrt{N}}\ai^T\t{B}^{-1}  \\ -\frac{1}{m\sqrt{N}}\t{B}^{-1}\ai & \t{B}^{-1} + \frac{1}{m}\t{B}^{-1}\frac{\ai\ai^T}{N}\t{B}^{-1}}\mat{\frac{N-1}{\sqrt{N}} \\ -\ai^T} \\
=&-(N-1) - \Big[ (N-1)^2(1+a_i) + 2 (N-1)a_i + \frac{a_i}{1+a_i} + \frac{a_i^2}{1+a_i} \Big] \\
=&-N (N-1) - N^2a_i ,
}

where $a_i = \mat{q_{i,2},...,q_{i,N}}\diag{\frac{1+\lambda_2}{-\lambda_2}, ... ,\frac{1+\lambda_N}{-\lambda_N}} \mat{q_{i,2},...,q_{i,N}}^T$.

When a test set $\T$ is specified, since $v_i\not\in \T$, 
\eqn{R_s(\{v_i\})&=\v{1}^TT\t{L}^{-1}T^T\v{1}=-(|\T|-1)-\v{1}^TT\t{Q}\left[\left(I_N+\LL\right)^{-1}+r_i^Tr_i\right]^{-1}\t{Q}^TT^T\v{1}\\&=-(|\T|-1)-\v{1}^TTQ\left[\left(I_N+\LL\right)^{-1}+r_i^Tr_i\right]^{-1}Q^TT^T\v{1}.} 
A similar algorithm can be derived, though the runtime complexity may has a factor $|\T|$. 
\end{proof}

\begin{algorithm}[H]
 \caption{Fast first-step $R_s(\{v\})$ evaluation with singular Laplacian.}\label{algo:fastFirstStep}
 \normalsize
 \KwIn{Singular connected graph Laplacian $L$.}
 \KwOut{$R_s(\{v_i\}) = \v{1}^TL_{(\V-\{v_i\})}^{-1}\v{1},\quad i=1,\dotsc,N$.}
 Perform eigen-decomposition $L=Q\LL Q^T$, where $\LL=\diag{\lambda_1,...,\lambda_N}$ in ascending order.\\
 Denote  $M^{-1} \leftarrow \diag{0,\frac{1+\lambda_2}{-\lambda_2}, ... ,\frac{1+\lambda_N}{-\lambda_N}}$ and $Q=\mat{r_1\\\cdots\\r_N}$.\\
 \For{$i=1,...,N$}{
 $a_i\leftarrow r_i M^{-1} r_i^T$.\\
 $R_s(\{v_i\}) = -N(N-1)-N^2a_i.$
  }
\end{algorithm}

}

\section{Experiment}

We performed various active learning methods on the DBLP coauthorship graph dataset\footnote{\scriptsize\url{http://www.informatik.uni-trier.de/~ley/db/}} on four areas: machine learning, data mining, information retrieval and database. Edge weights are the number of papers coauthored. We took its largest connected component, which contains 1711 nodes and 0.3\% of all possible edges. 
We used the V-optimality criterion (\S~2), mutual information gain $(\max_\L \mathcal{MI}(\L;\V-\L))$ [1], and random selection. For fair comparison, every method was assigned the same\draftfootnote{Actually, with 4 random seeds, each in a class, our V-optimality curve becomes worse than random after about 60 queries. So I suppose we should stay with the better old result?} random seed to start and the curves are the mean and the standard error of the mean after 120 repetitions (Figure~1). The V-optimality criterion performs better than others.
\begin{figure}[htbp]
\begin{center}
\includegraphics[width=.7\linewidth]{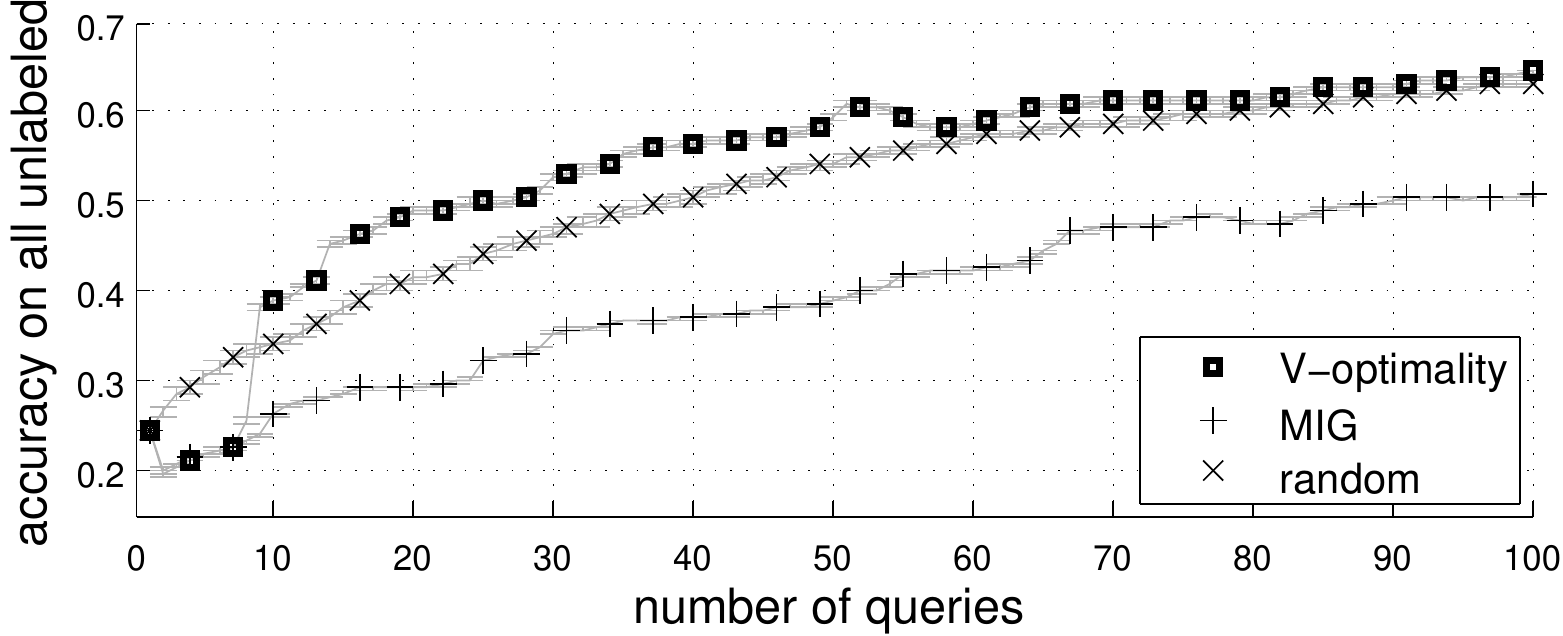}
\vspace{-1em} 
\caption{Batch active learning to classify the unlabeled authors
on DBLP coauthorship graph.}
\label{default}
\end{center}
\end{figure}
\vspace{-2em}
\section{Conclusion}
In this paper, we introduced the GRF model (\ref{eqn:theGRFmodel}) and the Gaussian harmonic prediction (\ref{eqn:GaussianHarmonicPrediction}). The batch active learning with V-optimality criterion, whose risk function is (\ref{eqn:Risk}) can be formulated as the subset selection problem (\ref{eqn:problem}). Our major contribution is to prove the submodularity conditions (\ref{eqn:submod1}-\ref{eqn:submod3}) and an $(1-1/e)$ optimality bound (\ref{eqn:1-1/e}) for a greedy selection algorithm (Algo 1) when the graph Laplacian is nonsingular (\ref{ppt1}-\ref{ppt4}), via either extracting a subgraph from the original connected graph or regularizing the GRF model. Furthermore, the fact that all GRFs meet the AofS condition (Theorem 2) may shed light on this otherwise obscure condition. 

In \S~4, we also proposed an active survey problem and its related risk $R_s(\L)$. We can show that this batch active survey problem also meet the submodularity conditions and its greedy subset selection algorithm achieves a similar $(1-1/e)$ optimality bound.

\subsubsection*{References}

\small{
[1] Andreas Krause, Aarti Singh, Carlos Guestrin. Near-optimal Sensor Placements in Gaussian Processes: Theory, Efficient Algorithms and Empirical Studies. Journal of Machine Learning Research (JMLR) 2008.

[2] Ming Ji and Jiawei Han. A Variance Minimization Criterion to Active Learning on Graphs.
AISTAT 2012.

[3] Abhimanyu Das and David Kempe. Submodular meets Spectral: Greedy Algorithms for Subset Selection, Sparse Approximation and Dictionary Selection. 
ICML 2011

[4] Abhimanyu Das and David Kempe. Algorithms for Subset Selection in Linear Regression. ACM Symposium on Theory of Computing, STOC 2008.

[5] Xiaojun Zhu, John Lafferty, and Zoubin Ghahramani. Combining active learning and semi-supervised learning using gaussian Þelds and harmonic functions. In the workshop on The Continuum from Labeled to Unlabeled Data in Machine Learning and Data
Mining, ICML 2003.

[6] Matthew Streeter\&Daniel Golovin.An Online Algorithm for Maximizing Submodular Functions.NIPS2008.

[7] Roman Garnett et. al. Bayesian Optimal Active Search and Surveying. ICML 2012.

[8] Burr Settles. Active Learning Literature Survey. Computer Sciences Technical Report 1648, University of Wisconsin-Madison. 2009.

\wsVer{[9] Yifei Ma et. al. Submodularity in Batch Active Learning and Survey Problems on Gaussian Random Fields. arXiv:xxxx.xxxx. 2012. \draftfootnote{the id}}
}

\end{document}